\title{Who's responsible? Jointly quantifying the contribution of the learning algorithm and  data
}
\author{Gal Yona\thanks{}\\Weizmann Institute of Science\\\texttt{gal.yona@weizmann.ac.il}
\and
Amirata Ghorbani\thanks{}\\Stanford University\\\texttt{gal.yona@weizmann.ac.il}
\and
James Zou\thanks{}\\Stanford University\\\texttt{gal.yona@weizmann.ac.il}}
\author{Gal Yona \\ Weizmann Institute of Science
\and
Amirata Ghorbani \qquad James Zou\thanks{Correspondence: \texttt{jamesz@stanford.edu}}\\Stanford University}
\begin{document}

\maketitle

\begin{abstract}
A learning algorithm $A$ trained on a dataset $D$ is revealed to have poor performance on some subpopulation at test time. Where should the responsibility for this lay? It can be argued that the data is responsible, if for example training $A$ on a more representative dataset $D'$ would have improved the performance. But it can similarly be argued that $A$ itself is at fault, if  training a different variant  $A'$  on the same dataset $D$ would have improved performance. As ML becomes widespread and such failure cases more common, these types of questions are proving to be far from hypothetical. With this motivation in mind, in this work we provide a rigorous formulation of the joint credit assignment problem between a learning algorithm $A$ and a dataset $D$. We propose Extended Shapley as a principled framework for this problem, and experiment empirically with how it can be used to address questions of ML accountability.

\end{abstract}

\section{Introduction}
\label{sec:intro}

 As machine learning is deployed in consequential domains, questions of responsibility and accountability in ML are becoming increasingly important. 
One interesting example is the debate around a recent super-resolution algorithm \cite{menon2020pulse}. Users that experimented with this algorithm for the purpose of generating high-quality images of people from their pixelated versions observed that the performance for non-white individuals was poor. For example, when queried with a pixelated image of president Obama, the model returned a photo of a white man. ``Other generated images gave Samuel L. Jackson blonde hair, turned Muhammad Ali into a White man, and assigned White features to Asian women'' \cite{VentureBeatAIBias}. 

These findings sparked internet controversy around the source of these failures (see \cite{TheGradientPulseLessons} for a comprehensive overview). Some have argued that since the training data used for the algorithm is itself ``biased'' (does not have equal representation for white and non-white individuals), surely the results will be biased too. Others have argued that a dataset will never be perfectly balanced, and this cannot serve as an excuse for the deterioration of performance. They hypothesized that a different learning objective would have produced far better results. 

The debate highlights that reasoning about  questions of ML accountability can be challenging. The first perspective can be considered data-centric, in that it views unfairness of the eventual ML model as a property of  the training data alone. The second perspective is more holistic as it highlights that there are many other choices in the algorithm design process that may share some of the responsibility. However, the typical way in ML to compare two algorithms is to evaluate the difference in their performance on some fixed benchmark. This can be viewed as algo-centric, in the sense that it ignores the potential effect the training data has on this gap. Both of these perspectives are therefore limited in their ability to address real-world questions of accountability, in which there are  complex interactions between the training data and the different algorithmic choices made when developing the model.

\subsection{Our contributions} 

Motivated by such scenarios and these two opposing perspectives (data-centric and algo-centric), we provide the first rigorous formulation of the problem of joint credit assignment between the learning algorithm and training data. Despite its' importance in developing accountable ML systems, we find that this question has not been  formulated or systematically studied.

We then propose Extended Shapley as a principled framework for the joint credit assignment problem. We prove that it is the unique valuation scheme satisfying several desirable properties, and characterize the resulting values from the perspective of both the datapoints and the algorithm (Theorem \ref{prop:unique_p1p5}). As in general the values are expensive to compute, we extend  approximation schemes introduced in ~\cite{ghorbani2019data} to efficiently compute Extended Shapley even in very large datasets.

Finally, we empirically demonstrate how Extended Shapley can be used to address  questions pertaining to accountability. Viewing Extended Shapley as a novel performance metric, our focus is on comparing between the insights provided by the value Extended Shapley assigns to the algorithm (versus some benchmark algorithm), as opposed to simply comparing the marginal difference in performance between the two. We hope that this could pave the way for Extended Shapley (and potentially other approaches to the joint credit assignment problem) to be used to debug realistic failure cases such as the super-resolution example  discussed above.

\subsubsection{Related work}
Shapley value was proposed in a classic paper in game theory \cite{shapley1953value}. It has been applied to analyze and model diverse problems including voting and bargaining \cite{milnor1978values, gul1989bargaining}. Recent works have adopted Shapley values to quantify the contribution of data in ML tasks~\cite{ghorbani2019data,jia2019towards, agarwal2018marketplace,ghorbani2020distributional}. However in these settings, the ML algorithm is assumed to be given and the focus is purely on the data value.     Shapley values have also been used to analyze algorithm portfolios \cite{frechette2016using} with extensions to temporal setting found in \cite{kotthoff2018quantifying}. But in these settings the algorithms are not ML algorithms and the contribution of the data is not modeled.     In contrast, in our extended Shapley framework, we jointly model the contribution of both the learning algorithm and the data. 

In other ML contexts, Shapley value has  been used as a feature importance score to interpret black-box predictive models \cite{owen2014sobol,kononenko2010shap1,datta2016shap2,lundberg2017shap3, cohen2007shap4, ghorbani2019interpretation, chen2018shapley, lundberg2018consistent}. Their goal is to quantify, for a given prediction, which features are the most influential for the model output. Our goal is very different in that we aim to quantify the value of individual data points (not features). 
There is also a literature in estimating Shapley value using Monte Carlo methods and analytically solving Shapley value in specialized settings \cite{fatima2008linear, michalak2013efficient, castro2009approx1, maleki2013bounding, hamers2016new}. 
Recent works have studied Shapley value in the context of data valuation focusing on approximation methods and applications in a data market.~\cite{ghorbani2019data, jia2019towards, agarwal2018marketplace}.

Finally, algorithmic accountability is an important step towards making ML more fair and transparent~\cite{zou2018ai}, yet there has been relatively little work on quantitative metrics of accountability. In the context of fairness, \cite{hooker2021moving} argues that fairness shouldn't be viewed as solely a data problem since seemingly subtle model design choices may amplify existing biases in the data \cite{hooker2020characterising, hooker2019compressed, bagdasaryan2019differential}. To the best of our knowledge, ours is the first work to formalize and study the problem of jointly 
quantifying the contribution of both training data and the learning algorithm in ML.   

\section{Jointly modeling the contribution of data and algorithm}

\subsection{Data valuation} \label{sec:prelims}
Recent works studied how to quantify the value of the individual training datum for a fixed ML model~\cite{ghorbani2019data}. In the data valuation setting, we are given a training set $N = \{z_1, ..., z_n\}$ of $n$ data points and we have a fixed learning algorithm $A$\footnote{We will often denote $z_i$ by its index $i$ to simplify notation.}. $A$ operates on any subset $S \subseteq N$, producing a predictor $A(S)$. The performance of $A(S)$ is quantified by a particular metric---accuracy, $F_1$, etc.---evaluated on the test data, and we denote this as $v_A(S)$. More formally, $G \equiv \{v:2^{N}\to\R$ \mbox{ such that } $v(\emptyset)=0$\} denote the space of performance functions and $v_A \in G$. The algorithm's overall performance is $v_A(N)$ and the goal of data valuation is to partition $v_A(N)$ among $i \in N$. 

Drawing on the classic Shapley value in cooperative game theory \cite{shapley1953value,shapley1988shapley}, it turns out that that there is a unique data valuation scheme, $\phi: G \to \R^n$, that satisfies four reasonable equitability principles:
\begin{enumerate}[S1.]
    \item Null player: If $i\in N$ is a null player in $v$---i.e. $v(S+i)=v(S)$ $\forall S \subseteq N-i$---then $\phi_i(v)=0$.
    
    \item Linearity: If $v_1, v_2 \in G$ are two performance functions, then $\phi(v_1+v_2)=\phi(v_1)+\phi(v_2)$.
    
    \item Efficiency: $\sum_{i\in N}\phi_{i}(v)=v(N)$.
    
    \item Symmetry: If $i,j \in N$ are identical in $v$---i.e.  $v(S+i) = v(S+j)$ $\forall S \subseteq N-\{i,j\}$---then $\phi_i(v)=\phi_j(v)$.
\end{enumerate}

This unique data Shapley value  \cite{ghorbani2019data} is given by:

\begin{equation}
\label{eqn:shapley}
\phi_i(v)= \sum_{S\subset N-i} {\tau_{S,N}\cdot \sbr{v(S +i ) - v(S)}}
\end{equation}

where $\phi_i$ denotes the value of the $i$-th training point and $\tau_{S,N} = \frac{\card{S}!(\card{N}-\card{S}-1)!}{\card{N}!}$ is a weighting term.

\subsection{Data and learning algorithm valuation}
While data Shapley (Eqn.~\ref{eqn:shapley}) is useful---e.g. it identifies poor quality data and can improve active learning \cite{ghorbani2019data}---it implicitly assumes that \emph{all} of the performance is due to the training data, since $\sum_{i \in N} \phi_i = v_A(N)$. This data-centric view is limiting, since we know that a good design of the learning algorithm $A$ can greatly improve performance, and hence $A$ deserves some credit. Conversely, if $A$ fails on a test domain, $A$ should share some of the responsibility. Much of ML takes the other extreme---the algorithm-centric perspective---in assigning credit/responsibility. Many ML papers present a new learning algorithm $A$ by showing a performance gap over a baseline algorithm $B$ on a standard dataset $N$. It is often framed or implicitly assumed that the entire performance gap is due to the higher quality of $A$ over $B$. As we will discuss, this algorithm centric perspective is also limiting, since it completely ignores the contribution of the training data in $N$ to this performance gap. 

 In many real-world applications, we would like to jointly model the value of the data and learning algorithm. Training data takes time and resource to collect and label. Developing an advanced algorithm tailored for a particular dataset or task also requires time and resource, and it can lead to a better outcome than just using the off-the-shelf-method (e.g. a carefully architectured deep net vs. running scikit-learn SVM~\cite{scikit-learn}). Moreover, in many applications, the algorithm is adaptively designed based on the data.
 Therefore it makes sense to quantify the contribution of the data and algorithm together, which is the premise of Extended Shapley. To make our approach more flexible we include a baseline model $B$. This only allows us more flexibility in general and is not a constraint: when we are interested in the value of $A$ without comparing it to a baseline we can always set $v_B =0$. In some applications, there's a natural baseline and it's useful to specify that as $B$.

\paragraph{Warm-up: Symmetric version. } 
A naive extension of Data Shapley to this new setup is to explicitly add the algorithm $A$ and benchmark $B$ to the coalition as the $n+1$-th and $n+2$ synthetic ``data''. 
Then use the Data Shapley as above (Equation \ref{eqn:shapley}) with respect to the enlarged coalition and the following modified value function for $S \subseteq N+2$:
\vspace{-0.5cm}

\begin{equation}
\label{eqn:naive_model}
v'(S)=
\begin{cases}
0 & A,B\notin S\\
v_{A}(S - \set{A}) & A \in S, B\notin S \\
v_{B}(S - \set{B}) & A \notin S, B\in S \\
\max\{v_A(S - \set{A,B}), v_B(S - \set{A,B})\} & A \in S, B\in S
\end{cases}
\end{equation}
where $v_A(S)$ and $v_B(S)$ are the performance of $A,B$ trained on $S$, respectively. However, this  model is not appropriate to settings in which the baseline $B$ is an existing algorithm and the objective is to quantify the improvement of $A$ over the baseline model. For example, 
suppose the ML developer is lazy and provides a fancy algorithm $A$ which is exactly the same as $B$ under-the-hood. Then $A$ and $B$ would be completely symmetric in Eqn.~\ref{eqn:naive_model}, and would receive the same value (which could be substantial) under Eqn.~\ref{eqn:shapley}. Taking the perspective that $B$ is ``common knowledge'', under this scheme the ML developer  receives credit for doing  no work, which is undesirable. 


\section{Extended Shapley}
Building on the notation of Section~\ref{sec:prelims}, we now formally define our framework.  We focus on the asymmetric version, in which an ML designer  develops algorithm $A$, and $B$ denotes an existing baseline ML model. For. example, $A$ could be a new tailored network and $B$ an off-the-shelf SVM. Both $A$ and $B$ are trained on the dataset $N$. Our goal is to jointly quantify the value of each datum in $N$ as well as the ML designer who proposes $A$.  We are not interested in the value of $B$; its inclusion makes the framework more flexible and, as mentioned previously,  can always be set to zero.

Let $v_{A},v_{B}\in G$ denote the performances of $A$ and $B$, respectively. Denote $v=(v_{A},v_{B}) \in G\times G \equiv G^2$. We are now looking for an extended valuation function, $\varphi:G^2 \to\R^{n+1}$. Note that $\varphi$ is defined over pairs of games, and that its' range is in $\R^{n+1}$. We interpret $\varphi_{i}(v)$ for $i=1,\dots,n$ as the value for a datapoint $i\in N$ and $\varphi_{n+1}(v)$, sometimes denoted $\varphi_A(v)$,  as the payoff for the ML designer who develops $A$.

Naturally, there are infinitely many possible valuation functions. Following the approach of the original Shapley value, we take an axiomatic approach by first laying out a set of reasonable properties that we would like an equitable valuation to satisfy:

\begin{enumerate}[P1.]
    \item Extended Null Player: If $i\in N$ is a null player in both $v_{A},v_{B}$, then $\varphi_{i}(v)=0$. Additionally, if $A$ is identical to the benchmark $B$---i.e. $v_A(S) = v_B(S)$ $\forall S \subseteq N$---the designer should receive $\varphi_{n+1}(v)=0$. 
    
    \item Linearity: Let $v_{A1}, v_{A2}, v_{B1}, v_{B2} \in G$ be any four performance functions. Consider $v_1 = (v_{A1}, v_{B1})$ and $v_2 = (v_{A2}, v_{B2})$. Then, $\varphi(v_1+v_2) = \varphi(v_1) + \varphi(v_2)$.
    
    \item Efficiency w.r.t $v_{A}$: $\varphi_A(v) + \sum_{i\in N}\varphi_{i}(v)=v_{A}(N)$.
    
    \item Symmetry between data: If $i,j \in N$ are identical in both $v_A,v_B$, then $\varphi_i(v)=\varphi_j(v)$.
    
    \item Equitability between data and algorithm: if adding $i \in N$ and adding $A$ have the same effect---i.e. $\forall S \subseteq N-i$, $v_B(S+i) = V_A(S)$---then $\varphi_i(v)=\varphi_A(v)$.
\end{enumerate}

P1 to P4 are direct analogues of the fundamental fairness axioms of the Shapley value and the data Shapley value. Note that P1 encodes the asymmetry between $A$ and $B$. P5 is also a reasonable property that ensures that the algorithm has no ``special'' role: if the algorithm and data have exactly the same effect, they receive the same value. 
It might be helpful to think of the extreme case where $A$ is simply adding another datum $z_A$ to the training set and then apply $B$. In this case, P5 is equivalent to P4.

The next proposition proves that there is a  unique valuation scheme that satisfies properties P1-P5, and characterizes the values  $A$ and the data receive under this scheme. 

\begin{proposition}
\label{prop:unique_p1p5}
There is a unique valuation $\varphi: G^2 \to \R^{n+1}$ that satisfies $P1$ to $P5$,  given by
\begin{equation}\label{eqn:algorithm_value}
    \varphi_A(v)=\sum_{S \subseteq N} \frac{S! (N-S)!}{(N+1)!}[v_A(S) - v_B(S)]
\end{equation}
\vspace{-0.5em}
\begin{multline}\label{eqn:data_value}
    \varphi_i(v) = \sum_{S\subseteq  N-i}{w_{S,N}\cdot\tau_{S,N}\cdot  \sbr{v_A(S+i)-v_A(S)}}   +  \sum_{S\subseteq  N-i}{\br{1-w_{S,N}}\tau_{S,N}\cdot  \sbr{v_B(S+i)-v_B(S)}}
\end{multline}
where $w_{a,b}=\frac{a+1}{b+1}$. We call $\varphi_A, \varphi_i$ the Extended Shapley values.

\end{proposition}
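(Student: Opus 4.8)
The plan is to reduce the whole problem to the classical Shapley value on an augmented cooperative game in which the learning algorithm is promoted to an extra, $(n{+}1)$-th player. Given $v=(v_A,v_B)\in G^2$, I define a single game $\hat v$ on the player set $N\cup\{A\}$ by
\[
\hat v(S)=v_B(S)\quad\text{and}\quad \hat v(S\cup\{A\})=v_A(S)\qquad\text{for every }S\subseteq N,
\]
so that ``$A$ absent'' invokes the baseline and ``$A$ present'' invokes the designer's algorithm. Since $v_A(\emptyset)=v_B(\emptyset)=0$, the map $v\mapsto\hat v$ is a linear isomorphism from $G^2$ onto the subspace $\hat G_0$ of $(n{+}1)$-player games satisfying $\hat v(\emptyset)=0$ and $\hat v(\{A\})=0$; a dimension count ($2(2^n-1)=2^{n+1}-2$ on both sides) confirms it is onto.

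The core step is to show that, under this identification, axioms P1--P5 are exactly the four classical Shapley axioms (null player, linearity, efficiency, symmetry) for $\hat v$. Linearity (P2) and efficiency w.r.t. $v_A$ (P3, using $\hat v(N\cup\{A\})=v_A(N)$) transfer immediately. The null-player axiom P1 splits into two classical null-player statements: ``$i$ null in both $v_A,v_B$'' is equivalent to ``$i$ null in $\hat v$'', while ``$v_A\equiv v_B$'' is equivalent to ``$A$ null in $\hat v$''. Symmetry between data (P4) is classical symmetry restricted to pairs inside $N$. The key identity is for P5: two players $i\in N$ and $A$ are classically symmetric in $\hat v$ iff $\hat v(S\cup\{i\})=\hat v(S\cup\{A\})$ for all $S\subseteq N-i$, and since $\hat v(S\cup\{i\})=v_B(S+i)$ and $\hat v(S\cup\{A\})=v_A(S)$, this is precisely the hypothesis $v_B(S+i)=v_A(S)$ of P5. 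Hence P5 is exactly classical symmetry for data--algorithm pairs, and P1--P5 jointly amount to the full set of classical axioms for $\hat v$ on $\hat G_0$.

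With the translation in hand, uniqueness and existence follow the standard route. For uniqueness I would take the unanimity games $\{u_T:\emptyset\neq T\subseteq N\cup\{A\},\ T\neq\{A\}\}$; these lie in $\hat G_0$ and, being $2^{n+1}-2$ linearly independent games, form a basis of it. Writing any $\hat v\in\hat G_0$ in this basis and invoking linearity reduces the determination of $\varphi$ to scaled unanimity games, on which null player, symmetry, and efficiency pin the value uniquely (each player of $T$ receives $c_T/|T|$, the rest receive $0$) --- exactly as in the classical uniqueness proof, the only new ingredient being that symmetric data--algorithm pairs are now covered by P5. For existence I would take $\varphi$ to be the classical Shapley value of $\hat v$, which satisfies P1--P5 by the axiom translation. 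Finally I would recover the stated formulas by computing this Shapley value and splitting each coalition according to whether it contains $A$: the marginal contributions of $A$ give $\sum_S \frac{|S|!(n-|S|)!}{(n+1)!}[v_A(S)-v_B(S)]$, i.e.\ Equation~\ref{eqn:algorithm_value}, and for a datum $i$ the ``$A$-present'' and ``$A$-absent'' coalitions produce the two sums in Equation~\ref{eqn:data_value} once one checks the coefficient identities $\frac{(|S|+1)!(n-|S|-1)!}{(n+1)!}=w_{S,N}\tau_{S,N}$ and $\frac{|S|!(n-|S|)!}{(n+1)!}=(1-w_{S,N})\tau_{S,N}$.

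The main obstacle I anticipate is not the algebra but getting the axiom translation airtight --- in particular confirming that P5 is genuinely equivalent to classical symmetry of the pair $(i,A)$ (neither weaker nor stronger), and that the algorithm clause of P1 corresponds to $A$ being a null player. A secondary technical point is that uniqueness must be argued on the proper subspace $\hat G_0$ rather than on all $(n{+}1)$-player games; this is harmless because the relevant unanimity games already lie in $\hat G_0$, but it is worth verifying explicitly that the classical linearity-and-basis argument is never forced to leave this subspace.
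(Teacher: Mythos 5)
Your construction of the augmented game $\hat v$ is exactly the paper's own construction (the game $\tilde v_{A,B}$ in its appendix), your axiom translation matches the paper's verification of P1--P5, and your derivation of the two formulas by splitting coalitions according to whether they contain $A$ is the same computation, with the correct coefficient identities. The genuine difference is in the uniqueness half. The paper's proof, as written, establishes only existence: it defines $\varphi = \phi^{SHAP}(\tilde v_{A,B})$, checks P1--P5, and computes the closed forms, but it never argues that \emph{any} valuation satisfying P1--P5 must coincide with this one. Your proposal supplies precisely that missing direction: the observation that $v \mapsto \hat v$ is a linear isomorphism of $G^2$ onto the subspace $\hat G_0$ of $(n+1)$-player games vanishing on $\emptyset$ and on $\{A\}$; the check that P1--P5 are \emph{equivalent} to (not merely implied by) the four classical axioms on $\hat G_0$, with P5 being literally classical symmetry of each pair $(i,A)$ since $\hat v(S+i) = v_B(S+i)$ and $\hat v(S \cup \{A\}) = v_A(S)$ for $S \subseteq N - i$; and the unanimity-basis argument, which is sound because the $2^{n+1}-2$ games $u_T$ with $\emptyset \neq T$ and $T \neq \{A\}$ all lie in $\hat G_0$, match its dimension, and the axioms pin down the value of each scaled unanimity game (null players outside $T$ get zero by both clauses of P1; players inside $T$ are equalized by P4 for data--data pairs and P5 for data--algorithm pairs; efficiency fixes the total), with the subspace never being left since $\hat G_0$ is closed under the sums involved. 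So both proofs share the same key construction, but yours proves the proposition as actually stated, whereas the paper's appendix proves only the existence-and-formula part; if anything, your writeup would strengthen the paper's.
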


Please see Appendix \ref{app:proofs:es} for the proof.

\paragraph{Interpretation of the Extended Shapley values.}
Each of the properties P1 to P5 is necessary to ensure the uniqueness of the valuation $\varphi$. We give some more intuition for interpreting the value Extended Shapley assigns $A$, and how the value it assigns to datapoints is different from regular Data Shapley (without taking into account $A$ and $B$):

\begin{itemize}
    \item \textbf{Value of algorithm}: In Eqn.~\ref{eqn:algorithm_value}, the coefficient is  $\frac{\card{S}! (N-\card{S})!}{(N+1)!} = \frac{1}{N+1}\cdot {N \choose \card{S}}^{-1}$. Thus intuitively, instead of simply comparing the difference in the two algorithms' overall performance $v_A(N) - v_B(N)$, Extended Shapley considers the difference  $v_A(S)-v_B(S)$, where $S$ is chosen randomly as follows: first a set size  $k$ is chosen uniformly at random from 0 to $N$, and then a random subset $S \subseteq N$ of that size is chosen. 
    \item \textbf{Value of datapoints}: 
In Eqn.~\ref{eqn:data_value}, if it weren't for the additional weight terms $w$ and $(1-w)$, the term on the left would be identical to the payoff of $i$ under Data Shapley w.r.t $A$ and the term on the right would be the payoff of $i$ under Data Shapley w.r.t $B$. Thus, one way of interpreting these two quantities is as variants of the Shapley payoff that adjusts the importance of subsets according to their size: since $w_{a,b} \to 1$ as $a \to b$, the expression on the left down-weights the importance ofthe  marginal contribution of $i$ to smaller subsets $S \subset N - i$ whereas the expression on the right down-weights the importance of the   marginal contribution of $i$ to larger subsets. 

\end{itemize}

In Appendix \ref{app:stability} we  derive a different view of the value of $A$ under Extended Shapley that is related to the notion of leave-one-out stability. We show that it can sometime give a  useful upper bound on $\varphi_A(v)$ without directly computing the expression in Eqn. \ref{eqn:algorithm_value}.

\paragraph{Efficient computation.}  As is clear from Eqn.~\ref{eqn:algorithm_value} and ~\ref{eqn:data_value}, computing the exact Extended Shapley values requires exponentially large number of computations in $N$ and therefore we extend the TMC-Shapley algorithm introduced in previous work~\cite{ghorbani2019data}. TMC-Shapley relies on two approximations: First, rearranging Eqn.~\ref{eqn:shapley} shows that the extended Shapley values are expectations of bounded random variables, which can be approximated by Monte-Carlo sampling. Secondly,  the marginal contribution of a point $i$ is close to zero when it's added to large sets: $v_A(S + i) - v_A(S) \approx 0$ for a large enough $|S|$. TMC-Shapley learns to truncate and set to 0 the terms in Eqn.~\ref{eqn:shapley} involving large sets. Details of the algorithm and its adaptation in our setting are described in Appendix C. The efficiency of this algorithm allows us to make the computation of the extended Shapley values feasible even in large data settings. For example, we show the extended Shapley values of 5 learning algorithms on the Adult Income dataset with 50k training points in Supp. Fig. 5.

\section{Applications of Extended Shapley}
\label{sec:applications}

With the definition and characterizations of Extended Shapley in place, we highlight several benefits and applications of jointly accounting for the value of the algorithm and the data points. We begin with the problem of assigning \emph{credit} to ML algorithms. We show that on three real-world disease prediction tasks, Extended Shapley consistently assigns positive value for a random forest classifier compared to logistic regression, even though the overall difference in test performance between the two models says otherwise. Next, we consider an example similar to the  one described in the introduction, where a gender-classification algorithm $A$ has significant performance disparity between different demographic groups. We show that in this case, the design of the algorithm itself is responsible for most of the disparity in prediction performance. The biased training data is also responsible for some disparity though a much smaller quantity. Finally, we consider the question of attributing responsibility in the setting of distribution shifts. We study a stylized setting in which algorithm $A$ under-performs compared to $B$ partly because there were mislabeled training instances. In this case, we show that the Shapley value of $A$ is negative ($A$ is to blame for the performance gap), but less negative then the marginal difference, highlighting that the issues with the data share some of the responsibility.

 \subsection{Understanding Extended Shapley as a new performance metric}
\label{sec:applications:progress}
    
    \paragraph{How does the algorithm's value depends on the data distribution?}
    We begin our experiments by applying Extended Shapley to a simple but illuminating setting where $A$ is a nearest neighbor classifier. We will see that Extended Shapley provides interesting insights into the way the algorithm's value depends on the data distribution and the algorithm itself. 
    For simplicity, the $N$ data points are scalars uniformly sampled from $[0, 1]$ and are  assigned a label using a binary labeling function. The labeling function divides the interval $[0, 1]$ into several sub-intervals. Points in each sub-interval are assigned one of the two labels, and the adjacent intervals are assigned the opposite label, and so on. The sub-intervals are randomly chosen such that the resulting labeling function is balanced; i.e. sub-intervals of each label will cover half of $[0, 1]$ (Fig.~\ref{fig:simple}(c)). $A$ is  a simple $k$NN (k-nearest neighbors) algorithm. The benchmark $B$ is a simple majority-vote classifier that assigns the same label to all of the test data points. The performance metric of $A$ and $B$ is the prediction accuracy on a balanced test set sampled from the same distribution. As the labeling is balanced, $v_B(S) = 0.5$ for $S \subseteq N$.  A version of this problem corresponds to a choice of a labeling function. 

\begin{figure*}[htb!]
        \centering
        \captionsetup{width=0.9\linewidth}
        \includegraphics[width=1.0\linewidth]{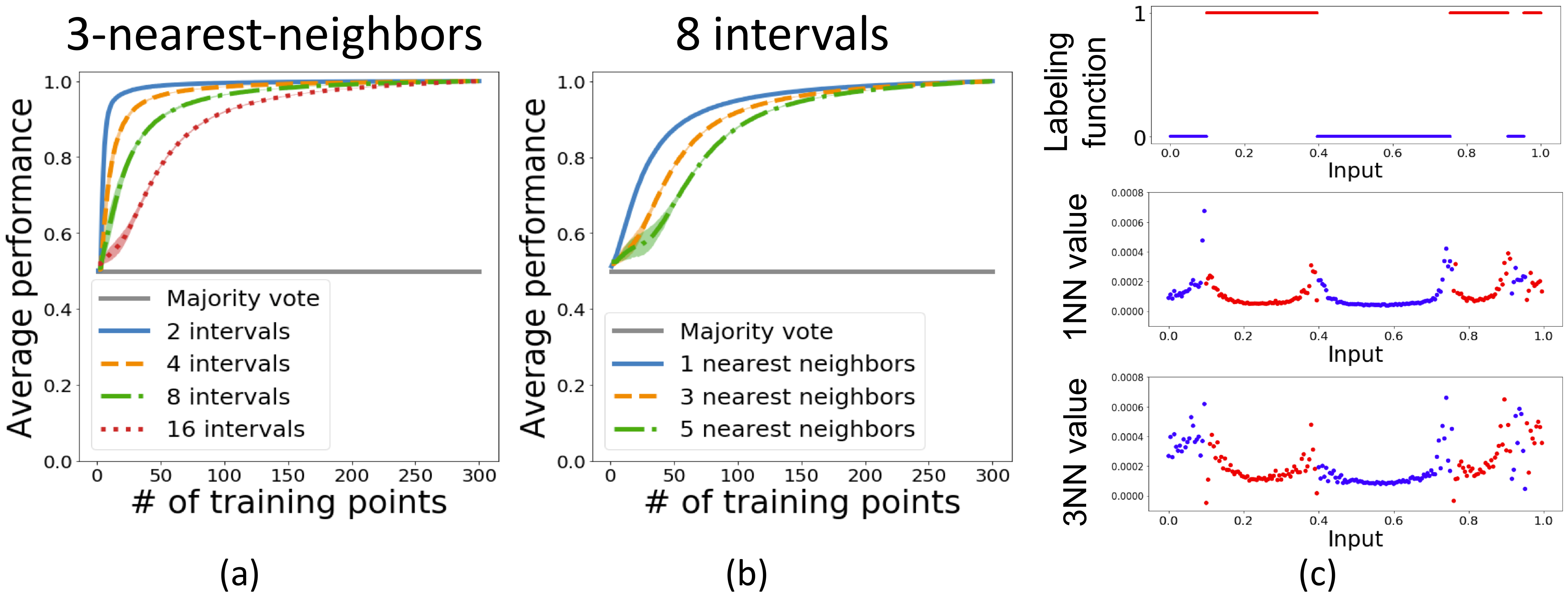}
        \caption{(a) 3NN algorithm applied to  problems of     various levels of difficulty. Using 300 training points, it achieves the same performance level on all datasets. Meanwhile, it has a smaller Shapley value  w.r.t. the Majority Vote benchmark (area between the curves and the grey line) as the dataset increases in difficulty (the \# of intervals increases). (b) $k$NN applied to the same  problem (8 intervals). Given 300 training points,  all algorithms have the same test performance, but 1NN has the highest Shapley value. (c) 1NN and 3NN applied to the same  problem (each color represents one of the labels). The Extended Shapley value of the data points tend to be higher near the interval boundaries. 
        }
        \label{fig:simple}
    \end{figure*}

    We first show how the Extended Shapley value of $A$  depends on the data distribution. In Fig.~\ref{fig:simple}(a), we apply 3-NN model on $N=300$ training points. There are 4 versions of the data distribution (2,4,8 and 16 intervals). As the number of intervals increases, the distribution becomes harder for the 3NN algorithm. The  baseline achieves 0.5 accuracy in all the versions. Each curve in Fig.~\ref{fig:simple}(a) plots the $v_A(S)$ averaged across the subsets $S$ of the same cardinality (shown on the x-axis). The Extended Shapley value of $A$ is exactly the area between the performance curve and the Majority Vote baseline. On the whole dataset $N$, the final performance $v_A(N)$ is the same across all the data versions. We see that $A$'s value decreases as the number of intervals increase. This reflects the intuition that the 3NN extracts less useful information when the neighbors have more alternating labels.


    Next we consider the complementary setting. We fix a version of the data distribution with 8 intervals, and apply several k-NN models for $k=1,3, 5$ (Fig.~\ref{fig:simple}(b)). Again, we see that all kNN models have the same final performance $v_A(N)$ on the full data $N$. However, the Extended Shapley value of $A$ decreases as $k$ increases. This is interesting because when the data has 8 intervals, using more neighbors is noisier for smaller training sizes.

    To conclude this example, we investigate the value of individual data points under Extended Shapley. We fix a  version with 6 intervals and apply 1NN and 3NN in this setting (Fig.~\ref{fig:simple}(c)). For each  data point, we create a data set of size $200$ by sampling $199$ data points of the same distribution and compute its value in that data set. We repeat this process 100 times and take the average value of that point.  The individual data values are plotted in the bottom two panels of Fig.~\ref{fig:simple}(c). The Extended Shapley data values are higher closer to the interval boundaries, as those points are informative. The data values for 3NN is overall slightly higher than for 1NN. The 3NN values are noisier since points in one interval (esp. near the boundary) can lead to mistakes in the adjacent interval when chosen as neighbors.
    

    \paragraph{Extended Shapley value vs marginal difference in performance.} 
    As a real-world example, we compare the performance and the Extended Shapley value of two different algorithms for the problem of disease prediction. The task is to predict, given an individual's phenotypic data, whether they will be diagnosed with a certain disease in the future. We created three problems from the UK Biobank data set~\cite{sudlow2015ukb}: predicting malignant neoplasms of breast, lung and ovary (ICD10 codes C50, C34, and C56). For each problem we create a balanced training  set of 1000 individuals with half diagnosed with the disease.
    We then compare the value of a random forest (as algorithm $A$) against the logistic regression (as the benchmark $B$). 
    
    In all three problems, the Extended Shapley value of $A$, as compared to $B$, is positive. Interestingly, this is the case even though the marginal difference in performance says otherwise! For lung cancer prediction both algorithms have the same performance ($63.1\%$ and $63.2\%$) and for ovary cancer prediction logistic regression even outperforms random forest ($71.9\%$ vs $69.0\%$). This suggests that even though logistic regression achieves good final accuracy, the training data itself deserves some of the credit for this success, since the model performs poorly on smaller subsets. 
    
        \begin{figure*}[ht]
        \centering
        \captionsetup{width=1.0\linewidth}
        \includegraphics[width=\linewidth]{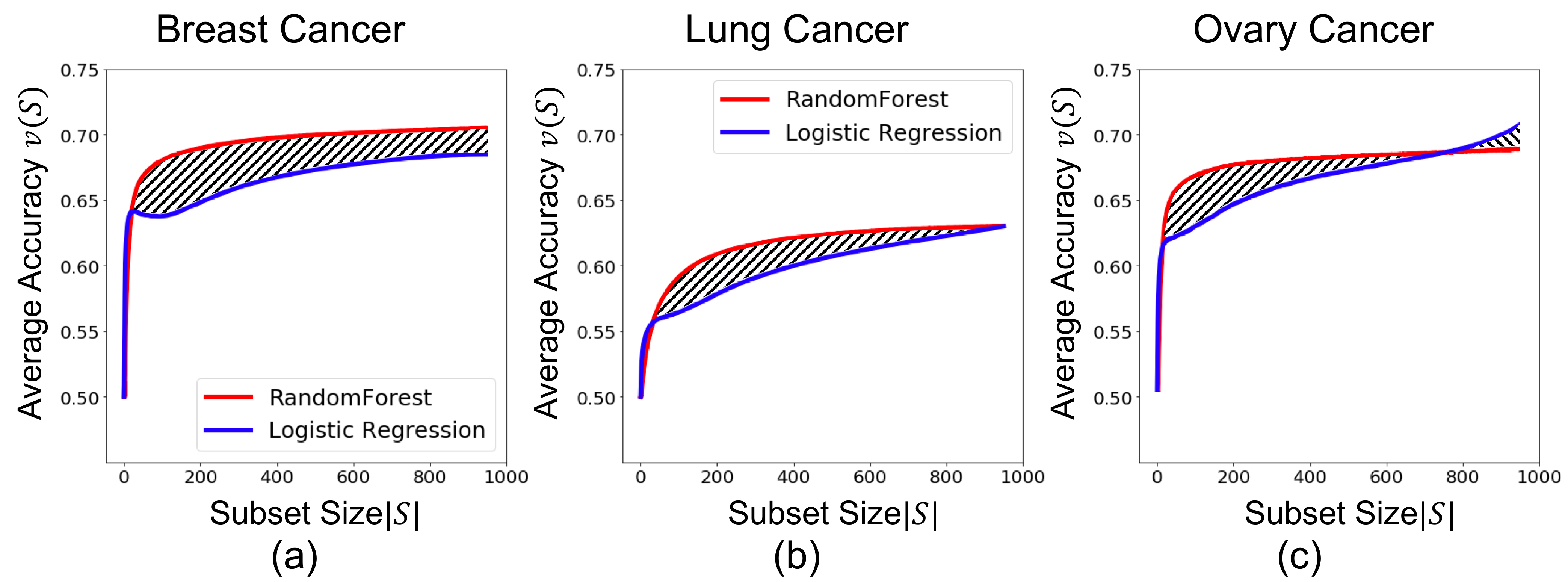}
        \caption{Following Eqn.~\ref{eqn:algorithm_value}, the shaded area corresponds to the (scaled) Shapley value of random forest (red) relative to the logistic regression baseline (blue). In all cases, the area is positive and so the value of random forest is positive, but when we consider the overall test performance (the last point of the plot), we have three different cases: (a) random forest has a better test performance, (b) both algorithms have the same test performance, (c) logistic regression has the higher test performance. 
    }
        \label{fig:ukb}

    \end{figure*}

    
\subsection{Unfairness:  influence of data vs algorithm}
Here we analyze an example similar to the one described in the introduction.  
It has been shown that gender detection models can be have poor performance on  minorities~\cite{buolamwini2018gender}. As discussed, an important aspect here is to understand the the extent to which this unfairness stems from the data versus the algorithm.  Following work in the literature~\cite{kim2019multiaccuracy}, we use a set of 1000 images from the LFW+A\cite{wolf2011lfw1} data set that has an imbalanced representation of different subgroups ($21\%$ female, $5\%$ black) as our training data. As our performance metric, we use an \emph{equity} measure defined based on maximum performance disparity of the model:
$\mbox{equity} = 1 - 
[\max_{g \in \mbox{subgroups}}
{\mbox{Acc.(g)}} -
\min_{g \in \mbox{subgroups}}
{\mbox{Acc.(g)}}]$, with 4 subgroups: white males, white females, black males and black females. An equitable model is one that has similar accuracy on all subgroups. We use 800 images of the PPB \cite{buolamwini2018gender}, a data set designed to have equal representation of sex and different skin colors, as our test data. All images are transformed into 128 dimensional features by passing through a Inception-Resnet-V1\cite{szegedy2017inception_resnet} architecture pre-trained on the Celeb A\cite{liu2015celeba} data set (more than 200,000 face images). 

As our training algorithm $A$, we use a logistic regression model. After training it on the LFW+A image features, the model yields a test accuracy of $94\%$ on PPB data. However, the performance gap among different subgroups is large: the model is $22.9\%$ more accurate on white males compared to black females meaning that $v_A = 77.1\%$.
For the baseline algorithm $B$, we  consider a perfectly fair algorithm $B$ 
(so $v_B = 100\%$; e.g. $B$ can be a constant classifier). 
After computing the Extended Shapley values for the data and algorithm $A$, we see that out of the $22.9\%$ equity difference between $A$ and $B$, the algorithm $A$ itself is responsible for $22.1\%$ of this disparity, while the data points are responsible for the rest. This suggests that the design of $A$ is responsible to a large part of the inequity in the performance, while the biased training set is also responsible, albeit to a much smaller extent\footnote{We highlight that in this experiment, by choosing $v$ to be equity, we have implicitly taken the perspective that equity is desirable independent of global accuracy (so $B$, despite having low accuracy, is indeed a reasonable benchmark).  It would be interesting to experiment with other performance metrics that \emph{combine} equity and global accuracy. For example, if it is the case that improving the disparity always comes at the cost of global performance, we would expect the algorithm to receive a smaller portion of the responsibility.}. 
Fig.~\ref{fig:values} shows the distribution of Extended Shapley values of the datapoints. Images of  men contribute to worse equity (i.e. negative Shapley value) of the final predictions. Training images of black females have the highest Shapley values. 

\begin{figure}[H]
    \centering
    \captionsetup{width=0.85\linewidth}
    \includegraphics[width=0.5\linewidth]{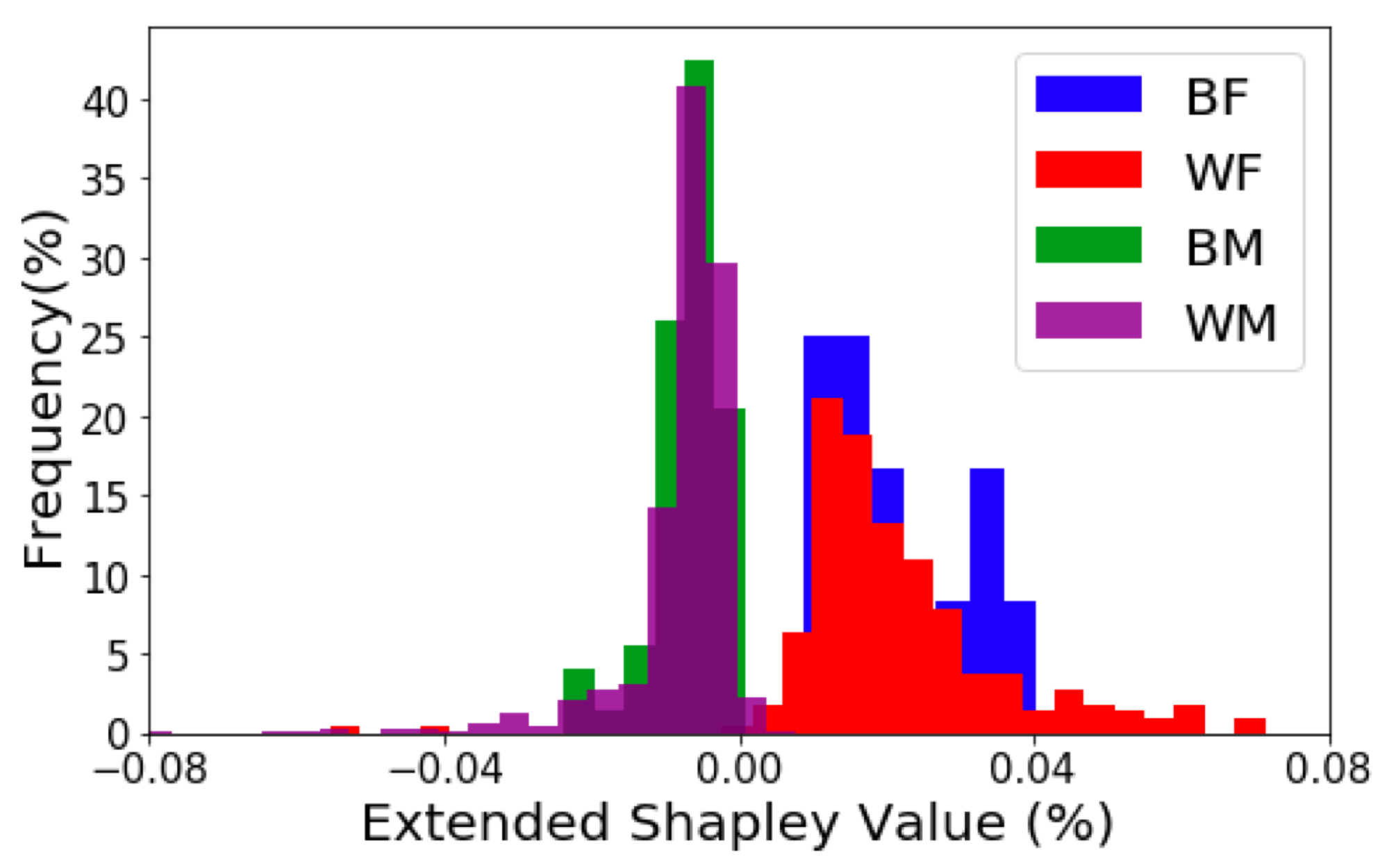}
\caption{The Extended Shapley value for the datapoints, across subgroups (in order: Black Females, White Females, Black Males, White Males). Positive value corresponds to datapoints that contribute positively to the model's equity.
    \label{fig:values}}
\end{figure}

\subsection{Accountability in the presence of distribution shifts}
\label{sec:applications:accountability}

Another common failure source for ML systems are differences between the conditions in which the model was developed and the conditions in which it is eventually deployed.  
Naturally, the choice of the learning algorithm can interact with, and potentially amplify, such differences. In this section we explore how Extended Shapley---by explicitly quantifying the value of both the ML designer and the data---can help addressing questions of accountability in these scenarios.

We instantiate this as follows. A training set is sampled from the source distribution, $D\sim \D_S$, but performance is evaluated on a test set sampled from the target distribution $D_T \sim \D_T$ (i.e., $v$ calculates some measure of accuracy on $D_T$). 
In particular, $\D_S$ and $\D_T$ are assumed to differ in that there is one sub-population whose labels are incorrectly flipped in the source distribution (see Figure \ref{fig:accountability}). As our benchmark $B$ we take the class of linear classifiers (hyperplanes). The ML designer, on the other hand chooses to work with a slightly more complex class: $A$ is the  intersection of (up to) two hyperplanes. To simplify the problem, we ignore issues of optimization and sample complexity and assume that $A$ and $B$ can solve their respective risk minimization problems exactly. We denote with $f_A, f_B$ the classifiers obtained by applying $A$ (resp., $B$) on $D$.

\begin{figure*}[htbp]
    \centering
    \captionsetup{width=0.85\linewidth}
    \includegraphics[width=0.95\columnwidth]{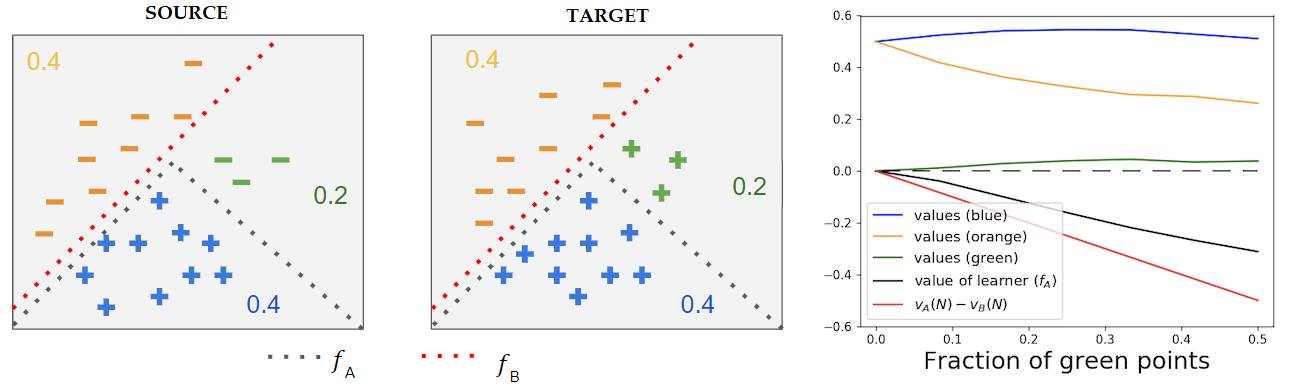}
    \caption{\textbf{Left:} The two figures on the left represent datasets sampled from the training distribution ($D$) and test distribution ($D_T$), respectively. Note that the the difference is that the label of the green points are flipped. This causes the non-linear classifier $f_A \leftarrow A(D)$ to under-perform relative to the benchmark linear classifier $f_B \leftarrow B(D)$: $v_A(D) = 0.8 < 1.0 = v_B(D)$. 
    \textbf{Right:} The value of the data and ML designer under Extended Shapley, when the fraction of green points  in the training set $p_g$ varies between $0.0$ and $0.5$. The marginal difference between the performance of $A$ and $B$ (red line) is exactly $-p_g$. Under Extended Shapley, the value of algorithm $A$ (black line) is less negative, suggesting that issue with the (green) data is partially accountable for the performance gap between $A$ and $B$. Indeed the green points have much less value than the blue and orange points.  
    }

    \label{fig:accountability}
\end{figure*}

As Figure \ref{fig:accountability} shows, the mislabeling in the source distribution (left) means  $f_A$ actually \emph{under-performs} relative to $f_B$ on $D_T$ (right). Whose to blame for the reduced performance? The algorithm-centric perspective would hold the algorithm designer accountable, since the choice of an ``incorrect'' hypothesis class (non-linear classifiers) has a cost of $v_A(D) - v_B(D) = 0.8 - 1.0 = -0.2$. The data-centric perspective would hold the subset of mis-labeled points accountable, since their incorrect labels is what decreases the performance of $A$. The answer is somewhere in between: both effects (the choice of the algorithm and the distribution shift) are necessary for the performance reduction. Extended Shapley provides a principled way of apportioning the marginal decrease of performance $-0.2$ between data and algorithm.  Indeed, the right panel of Figure \ref{fig:accountability} demonstrates that Extended Shapley  assigns $A$ a value that's less negative than $-0.2$, and the gap grows larger as the fraction of green points increases. See Appendix D for a detailed description of the setup of this experiment.

\section{Conclusions}

Extended Shapley provides a principled framework to jointly quantify the contribution of
the learning algorithm and training data to the overall models' performance. It's  a step in formalizing the notion of ML accountability, which is increasingly important especially as ML becomes more wide-spread in mission critical applications. 
The strong axiomatic foundation of Extended Shapley guarantees equitable treatment of data and algorithm.

A goal of our paper is to highlight the importance of jointly quantifying the accountability of the learning algorithm and the training data. This question is becoming increasingly critical as we seek to ensure that our AI systems are broadly beneficial. As a first step toward this ambitious goal, we propose Extended Shapley as one theoretically motivated approach. We recognize that there are potential limitations to the modeling assumptions behind Extended Shapley, and it would be interesting to explore alternatives. We hope this work provides one formal way to think about joint algorithm and data accountability and opens new research questions.  

There are several natural directions for future work. First, it would useful to extend our approach to multiple algorithms and multiple benchmarks. Second, we focused on using Extended Shapley as a diagnostics tool, but it would be interesting to investigate
the degree to which these insights can also be used to address issues at test time.
 Third, using Extended Shapley to debug other real-world failure cases could be interesting and instructive. Finally, it would be interesting to study other approaches to the joint data and algorithm valuation problem, for example using other solution concepts \cite{Yan2020IfYL}.

\newpage
\begin{appendix}
\section{Missing proofs}
\label{app:proofs}

\subsection{Proof of Proposition \ref{prop:unique_p1p5}}
\label{app:proofs:es}

The proof of the proposition will consist of two parts: first we define a valuation scheme and prove that it satisfies P1-P5; then we will show that this valuation scheme takes the form in the proposition statement. Define $\varphi : G^2 \to \R^{n+1}$ as follows:

\begin{equation*}
    \varphi(v) = \varphi(v_A, v_B) =  \phi^{SHAP}(\tilde{v}_{A,B})
\end{equation*}

where $\phi^{SHAP}$ is the regular Data Shapley and $\tilde{v}_{A,B}:2^{N+1}\to\R:$ is defined as
\begin{equation}
\label{eqn:v_tilde}
    \tilde{v}_{A,B}(S)=\begin{cases}
v_{A}(S-\{n+1\}) & n+1\in S\\
v_{B}(S) & n+1\notin S
\end{cases}
\end{equation}

We will use the fact that $\phi^{SHAP}$ satisfies properties S1-S4 to prove that $\varphi : G^2 \to \R^{n+1}$  satisfies properties P1-P5. 

\textbf{P1}. First, assume $i\in N$ is a null player in $v_A$ and $v_B$. We claim that this implies $i$ is a null player in the game $\tilde{v}_{A,B}$. To show this, we must prove that for every $S\subset(N+1)-i$, $\tilde{v}_{A,B}(S+i)=\tilde{v}_{A,B}(S)$. Indeed: if the subset includes $n+1$, the requirement is equivalent to $v_{A}(S+i)=v_{A}(S)$; if it doesn't, the  requirement is equivalent to $v_{B}(S+i)=v_{B}(S)$. In either case the statement is true since $i$ was assumed to be a null player in both $v_A$ and $v_B$. Thus from S1 we have that $\varphi_i(v)=\phi_i(\tilde{v}_{A,B})=0$.
    Finally, consider $n+1$. Note that for any $S\subset N$, the requirement that $\tilde{v}(S+(n+1))=\tilde{v}(S)$ is identical, by the definition of $\tilde{v}_{A,B}$, to $v_{A}(S)=v_{B}(S)$. Thus the assumption of P1 means $n+1$ is a null player in the game $\tilde{v}_{A,B}$. From S1 we therefore have that $\varphi_{n+1}(v)=\phi_{n+1}(\tilde{v}_{A,B})=0$, as required.

\textbf{P2}. Suppose $i,j$ are identical in both $v_{A}$ and $v_{B}$; we will show that this means they are identical in the game $\tilde{v}_{A,B}$. Consider subsets of $N+1-\{i,j\}$. Suppose a subset includes $n+1$, then $\tilde{v}(S+i)=\tilde{v}(S+j)$ is equivalent to $v_{A}(S+i)=v_{A}(S+j)$ and the later is true since $i,j$ are identical in $v_{A}$. Similarly if the subset doesn't include $n+1$, this is equivalent to $v_{B}(S+i)=v_{B}(S+j)$ which is true since $i,j$ are identical in $v_{B}$. Thus from S2, we have that $\varphi_i(v)=\phi_i(\tilde{v})=\phi_j(\tilde{v})=\varphi_j(v)$.    

\textbf{P3}.  Let $v_{A1}, v_{A2}, v_{B1}, v_{B2}$ be any four elements in $G$. We need to prove that $\varphi(v_{A1} + v_{A2}, v_{B1} + v_{B2})$ is the same as $\varphi(v_{A1}, v_{B1}) + \varphi(v_{A2}, v_{B2})$. Let $\tilde{v}$ denote the application of Equation (\ref{eqn:v_tilde}) w.r.t $v_{A1} + v_{A2}$ and $v_{B1} + v_{B2}$:

\begin{equation*}
    \tilde{v}(S)=\begin{cases}
(v_{A1}+v_{A2})(S-\{n+1\}) & n+1\in S\\
(v_{B1}+v_{B2})(S) & n+1\notin S
\end{cases}
\end{equation*}

By direction calculation, we see that $\tilde{v} \equiv \tilde{v}_{A1,B1}+\tilde{v}_{A2,B2}$. Thus, the linearity property for the regular Data Shapley guarantees that $\phi^{SHAP}(\tilde{v})=\phi^{SHAP}(\tilde{v}_{A1,B1})+\phi^{SHAP}(\tilde{v}_{A2,B2})$, which is, by definition, the same as the required.

    
\textbf{P4}. Indeed, $\sum_{i\in N+1}\varphi_{i}(v)=\sum_{i\in N+1}\phi_{i}(\tilde{v}_{A,B}) =
        \tilde{v}_{A,B}(N+1)=v_{A}(N)$,
    where the second transition is from S4 and the third is directly by the definition of $\tilde{v}_{A,B}$.

\textbf{P5}. Note that by the definition of $\tilde{v}$, $v_B(S+i)=v_A(S)$ is equivalent to $\tilde{v}(S+i)=\tilde{v}(S+(n+1))$, for every $S\subseteq N-i$. Then the assumption in this property implies that $n+1$ and $i$ are identical; from S2, we have that  $\varphi_i(v)=\phi_i(\tilde{v})=\phi_{(n+1)}(\tilde{v})=\varphi_A(v)$, as required.

We now turn to prove that the value assigned to the algorithm and to the datapoints follow the expressions in Equations \ref{eqn:algorithm_value} and \ref{eqn:data_value}. For the value of the algorithm, this follows directly by applying the definition of the Shapley value in the $n+1$ player game:

\begin{align*}
    \varphi_A(v) &= \phi^{SHAP}_{n+1}(\tilde{v}_{A,B}) \\
    &= \sum_{S\subseteq N}{\tau_{S, N+1} \cdot \sbr{\tilde{v}_{A,B}(S+(n+1)) - \tilde{v}_{A,B}(S)}} \\
    &= \sum_{S\subseteq N}{\tau_{S, N+1} \cdot \sbr{v_A(S) - v_B(S)} }
\end{align*}

We now turn to proving Equation \ref{eqn:data_value}. First, to simplify notation, we will use $\Delta v(S,i)$ to denote the marginal difference of $v$ evaluated on $S+i$ versus $S$:
\begin{equation*}
    \Delta v(S,i) = v(S+i) - v(S)
\end{equation*}
By the definition of Extended Shapley, 
\begin{equation*}
    \varphi_i(v) = \sum_{S \subseteq N+1 -i}\tau_{S,N+1}\cdot \Delta \tilde{v}_{A,B}(S, i)
\end{equation*}

We now split this sum into two terms, those subsets that include $n+1$ and those that don't:

\begin{equation*}
\sum_{\tiny \substack{S \subseteq N+1 -i \\ n+1 \in S}}\tau_{S,N+1} \Delta \tilde{v}_{A,B}(S, i)   +  \sum_{\tiny \substack{S \subseteq N+1-i \\ n+1 \notin S}}\tau_{S,N+1} \Delta \tilde{v}_{A,B}(S, i)
\end{equation*}

and consider each term separately. For the term on the right, note that iterating over subsets of $N+1-i$ that don't include $n+1$ is equivalent to iterating over subsets of $N-i$, and that in this case $\tilde{v}$ is evaluated as $v_B$;  
we can therefore re-write the term on the right as:

\begin{equation}
    \sum_{S\subseteq N-i}{\tau_{S, N+1}\cdot \Delta v_B(S, i)}
\end{equation}


For the  term on the left:
\begin{align*}
     \sum_{\substack{S \subseteq N+1 -i \\ n+1 \in S}}{\tau_{S, N+1}\cdot \Delta \tilde{v}_{A,B}(S,i)} =
     \sum_{\substack{S \subseteq N-i \\S' \leftarrow S +(n+1)}}{\tau_{S', N+1}\cdot \Delta \tilde{v}_{A,B}(S',i)} =
    \sum_{S\subseteq N-i}{\tau_{S+1, N+1}\cdot \Delta v_A(S,i)}
\end{align*}

Together, we have that that $\varphi_i(v)$ can be written as

\begin{equation*}
      \sum_{S\subseteq N-i}{\tau_{S+1, N+1}\cdot \Delta v_A(S,i)} + \sum_{S\subseteq N-i}{\tau_{S, N+1}\cdot \Delta v_B(S, i)}
\end{equation*}

The proof can be concluded by observing that $\tau_{S, N+1} = (1-w_{S,N})\cdot \tau_{S,N}$ and $\tau_{S+1, N+1} = w_{S,N} \cdot \tau_{S,N}$.

\section{An alternative view of the algorithm's value in Extended Shapley}
\label{app:stability}

We can provide a different view of the ML designer's value under Extended Shapley, that is related to the notion of leave-one-out stability.

\begin{lemma}
\label{lemma:shapley_learner2}
For a game $v\in G$, define $\overline{v} \in G$ as follows: $ \overline{v}(S) = \sum_{i \in N-S}{\sbr{v(S+i)-v(S)}}$. Then,
the value of the ML designer under Extended Shapley can also be written as
\begin{equation*}
    \varphi_A(v) = \sum_{S \subseteq N} {\frac{S! (N-S)!}{(N+1)!} \sbr{ \overline{v}_A(S) - \overline{v}_B(S)}}
\end{equation*}
\end{lemma}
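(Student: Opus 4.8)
The plan is to exploit the linearity of the target expression in the underlying games and reduce the whole claim to a single combinatorial identity. Observe that both $\varphi_A(v)$ (Eqn.~\ref{eqn:algorithm_value}) and the claimed right-hand side have the form $L(v_A) - L(v_B)$, where for a game $w \in G$ I define the linear functional
\[
    L(w) = \sum_{S \subseteq N} \frac{S!\,(N-S)!}{(N+1)!}\, w(S).
\]
Thus it suffices to prove that $L(\overline{w}) = L(w)$ for every $w \in G$; the lemma then follows immediately by applying this with $w = v_A$ and $w = v_B$ and subtracting. This reduction is the first step I would carry out, since it strips away the $A$ versus $B$ bookkeeping and isolates the real content.

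To establish $L(\overline{w}) = L(w)$, I would substitute the definition $\overline{w}(S) = \sum_{i \in N-S} w(S+i) - (N-|S|)\,w(S)$ into $L(\overline{w})$ and split it into two sums. For the first sum, $\sum_{S} \frac{S!(N-S)!}{(N+1)!}\sum_{i \in N-S} w(S+i)$, the key move is to swap the order of summation and group the terms by the larger set $T = S+i$: each nonempty $T$ arises from exactly $|T|$ pairs $(S,i)$, namely $S = T - j$ for $j \in T$, and the weight depends only on $|T|$. This converts the first sum into $\sum_{T} \frac{T!\,(N-T+1)!}{(N+1)!}\, w(T)$, the index ranging over nonempty $T$. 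The second sum is already indexed by $S$ and carries the weight $(N-|S|)\frac{S!(N-S)!}{(N+1)!}$.

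The final step is to combine the two sums term by term and invoke the elementary factorial identity $(N-k+1)! - (N-k)(N-k)! = (N-k)!$ with $k=|T|$, which collapses the combined coefficient of each nonempty $T$ back to $\frac{T!\,(N-T)!}{(N+1)!}$, exactly the weight appearing in $L(w)$. The one place where genuine care is needed, and what I expect to be the main (if modest) obstacle, is the empty-set term: $T = \emptyset$ never appears in the reindexed first sum, so its contribution comes only from the second sum and does \emph{not} automatically agree with $L(w)$. A direct check shows the leftover discrepancy is precisely $-w(\emptyset)$, so the identity $L(\overline{w}) = L(w)$ is not a formal identity for arbitrary set functions but relies on the normalization $w(\emptyset) = 0$. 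I would therefore be careful to invoke membership in $G$ at exactly this point, which is legitimate since $v_A, v_B \in G$.
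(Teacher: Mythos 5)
Your proposal is correct, but it takes a genuinely different route from the paper's proof. The paper never reduces the claim to a property of a single linear functional: instead it first derives an alternative double-sum expression for $\varphi_A(v)$ by combining the two efficiency axioms (P4 for Extended Shapley and S4 for ordinary Data Shapley, giving $\varphi_A(v) = \sum_{i\in N}\phi_i(v_A) - \sum_{i\in N}\varphi_i(v)$), substituting the closed form for the data values (Eqn.~\ref{eqn:data_value}), and rearranging; it then swaps the order of summation, $\sum_{i\in N}\sum_{S\subseteq N-i} = \sum_{S\subseteq N}\sum_{i\in N-S}$, to make $\overline{v}$ appear, and finishes with the coefficient identity $(1-w_{S,N})\tau_{S,N} = \tau_{S,N+1}$. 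Your proof instead starts directly from Eqn.~\ref{eqn:algorithm_value}, isolates the functional $L$, and establishes $L(\overline{w}) = L(w)$ by reindexing over $T = S+i$ (each nonempty $T$ arising from $|T|$ pairs) together with the factorial identity $(N-k+1)! - (N-k)(N-k)! = (N-k)!$. What the paper's route buys is that no new combinatorics is needed beyond the summation swap --- the heavy lifting is done by the already-proven axioms and data-value formula, and the derivation makes the connection to the data points' leave-one-out contributions explicit. What your route buys is self-containment and precision: it needs only Eqn.~\ref{eqn:algorithm_value}, not the data-value formula or efficiency, and it pinpoints exactly where the normalization $w(\emptyset)=0$ enters (the paper uses this too, but only implicitly, since the efficiency axioms as stated already presuppose $v(\emptyset)=0$). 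Your explicit treatment of the empty-set term, showing the discrepancy is exactly $-w(\emptyset)$, is a genuine point of care that the paper's argument never has to surface.
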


\begin{proof}
First, we note that by combining the efficiency axiom with the charectarization of the previous proposition, we can also write the ML designer's value as

\begin{multline*}
     \varphi_{A}(v) = \sum_{i\in N} \sum_{S\subset N-i}(1-w_{S,N})\cdot\tau_{S,N}\cdot\sbr{v_{A}(S+i)-v_A(S)}  -  \sum_{i\in N} \sum_{S\subset N-i}(1-w_{S,N})\cdot\tau_{S,N}\cdot\sbr{v_{B}(S+i)-v_B(S)}
\end{multline*}

To see this, first note that from the efficiency property of Extended Shapley (P4), $\sum_{i \in N}{\varphi_i(v)} + \varphi_{A}(v) = v_A(N)$. Additionally, from the efficiency property of the standard Data Shapley (S4) w.r.t $A$, $v_A(N) = \sum_{i\in N}{\varphi_i(v_A)}$. Thus, $\varphi_{A}(v) = \sum_{ i\in N}{\varphi_i(v_A)} - \sum_{i \in N}{\varphi_i(v)} $. By the definition of the Shapley value (Equation \ref{eqn:shapley}), the term on the left is $\sum_{i\in N}\sum_{S\subset N - i}\tau_{S,N}\cdot \Delta v_A(S, i)$. Now, by substituting  $\varphi_i(v)$ with the expression for the value of the datapoints (Equation (\ref{eqn:data_value}) and re-arranging, we get exactly the above.

Next, we note that in general the sum $\sum_{i\in N}\sum_{S\subseteq N-i}$ is equivalent to the sum $\sum_{S\subseteq N}\sum_{i\in N-S}$. We can therefore write:
\begin{align*}
& \sum_{i\in N}\sum_{S\subseteq N-i}(1-w_{S,N})\tau_{S,N} \Delta v(S,i) \\	&=\sum_{S\subseteq N}\sum_{i\in N-S}(1-w_{S,N})\tau_{S,N} \Delta v(S,i) \\
	&=\sum_{S\subseteq N}(1-w_{S,N})\tau_{S,N}\sum_{i\in N-S} \Delta v(S,i)  \\
	&= \sum_{S\subseteq N}(1-w_{S,N})\tau_{S,N} \cdot \overline{v}(S)
\end{align*}

Combining these two facts, we have:
\begin{equation*}
    \varphi_A(v)= \sum_{S\subseteq N}(1-w_{S,N})\tau_{S,N} \sbr{\overline{v}_A(S)-\overline{v}_B(S)}
\end{equation*}

The proof can be concluded by noting that $(1-w_{S,N})\cdot \tau_{S,N} = \tau_{S,N+1} = {\frac{S! (N-S)!}{(N+1)!}}$.
\end{proof}

In some cases, the view of $A$'s value in Lemma \ref{lemma:shapley_learner2} can provide an upper bound on $\varphi_A(v)$ without directly computing the expression in Eqn.~\ref{eqn:algorithm_value}.

\begin{lemma}
\label{lemma:shapley_upperbound}
If $A,B$ are $\gamma$-leave-one-out-stable w.r.t the performance metric $v$, i.e., for both $A$ and $B$ , $\max_{S\subseteq N, i \in N}|v(S+i) - v(S)| \leq \gamma$, then $\varphi_A(v) \leq n\cdot \gamma/2$.
\end{lemma}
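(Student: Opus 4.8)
The natural vehicle is Lemma~\ref{lemma:shapley_learner2} rather than the original expression in Eqn.~\ref{eqn:algorithm_value}, because the quantity $\overline{v}(S)=\sum_{i\in N-S}[v(S+i)-v(S)]$ is \emph{literally} a sum of leave-one-out marginal contributions, which is exactly what the hypothesis controls. So the plan is: first rewrite $\varphi_A(v)$ in the $\overline{v}$ form, then bound the bracket $\overline{v}_A(S)-\overline{v}_B(S)$ pointwise in $S$ using $\gamma$-stability, and finally pull the resulting $\gamma$ out of a purely combinatorial weighted sum.

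Concretely, Lemma~\ref{lemma:shapley_learner2} gives $\varphi_A(v)=\sum_{S\subseteq N}\tau_{S,N+1}\,[\overline{v}_A(S)-\overline{v}_B(S)]$, where $\tau_{S,N+1}=\frac{|S|!(n-|S|)!}{(n+1)!}$. The first step is to observe that $\overline{v}_A(S)-\overline{v}_B(S)$ is a sum over the $n-|S|$ points $i\in N-S$ of the relative marginal contribution $[v_A(S+i)-v_A(S)]-[v_B(S+i)-v_B(S)]$; since each of these leave-one-out terms is controlled by $\gamma$ under the stability hypothesis, the whole bracket is at most $(n-|S|)\gamma$. Substituting this bound and factoring out $\gamma$ reduces everything to evaluating the weight $\sum_{S\subseteq N}\tau_{S,N+1}\,(n-|S|)$. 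Grouping subsets by cardinality $k=|S|$ and using the collapse $\binom{n}{k}\tau_{S,N+1}=\binom{n}{k}\frac{k!(n-k)!}{(n+1)!}=\frac{1}{n+1}$, this weighted sum becomes $\frac{1}{n+1}\sum_{k=0}^{n}(n-k)=\frac{1}{n+1}\cdot\frac{n(n+1)}{2}=\frac{n}{2}$, so that $\varphi_A(v)\le n\gamma/2$, as claimed.

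The combinatorial evaluation is the routine part — the only thing to notice is that the Shapley weights flatten to the uniform value $\tfrac{1}{n+1}$ over cardinalities, after which the sum is immediate. The step that needs genuine care is the bound on $\overline{v}_A(S)-\overline{v}_B(S)$: since the lemma is a one-sided (upper) bound, one must keep the direction straight, controlling $A$'s marginals from above and $B$'s from below, and be careful about how the two games' stability constants combine across the subtraction. An alternative route that lands on the same arithmetic is to work directly from Eqn.~\ref{eqn:algorithm_value}: telescoping a path from $\emptyset$ to $S$ bounds $v_A(S)-v_B(S)$ linearly in $|S|$ (using $v_A(\emptyset)=v_B(\emptyset)=0$), and the twin identity $\sum_{S\subseteq N}\tau_{S,N+1}|S|=\tfrac n2$ closes the argument; I would present the $\overline{v}$ route since it makes the connection to leave-one-out stability most transparent.
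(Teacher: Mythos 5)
Your route is the paper's route: the paper also works from the $\overline{v}$ representation of Lemma~\ref{lemma:shapley_learner2} and closes with exactly the combinatorial collapse $\sum_{S\subseteq N}\tau_{S,N+1}(n-|S|)=\sum_{k=0}^{n}\frac{n-k}{n+1}=\frac{n}{2}$; that part of your argument is correct. The genuine gap is the step you yourself flag as delicate and then simply assert: the pointwise bound $\overline{v}_A(S)-\overline{v}_B(S)\le(n-|S|)\gamma$. Under the stated hypothesis---two-sided stability $|v(S+i)-v(S)|\le\gamma$ for each of $A$ and $B$ separately---each summand $[v_A(S+i)-v_A(S)]-[v_B(S+i)-v_B(S)]$ is only bounded by $2\gamma$, so your argument as written yields $\varphi_A(v)\le n\gamma$, not $n\gamma/2$. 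No bookkeeping repairs this while using the hypotheses symmetrically: take $v_A(S)=\gamma|S|$ and $v_B(S)=-\gamma|S|$ (both lie in $G$ and both are $\gamma$-stable); then the bracket equals $2\gamma(n-|S|)$ for every $S$, and $\varphi_A(v)=\sum_{S\subseteq N}\tau_{S,N+1}\cdot 2\gamma|S|=n\gamma$. Your alternative telescoping route from Eqn.~\ref{eqn:algorithm_value} loses the same factor of $2$ for the same reason.

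The paper's proof avoids subtracting two stability bounds. It proves a \emph{single-game} claim---if $v$ is $\gamma$-stable then $\sum_{S\subseteq N}\tau_{S,N+1}v(S)=\sum_{S\subseteq N}\tau_{S,N+1}\overline{v}(S)\le\frac{n}{2}\gamma$, which uses only the upper bound on that one game's marginals---applies it to $v_A$, and then concludes, which amounts to discarding the term $-\sum_{S\subseteq N}\tau_{S,N+1}v_B(S)$ entirely. That discarding step is legitimate only under an additional sign condition on $v_B$ (e.g.\ $v_B\ge 0$, as for accuracy-type metrics, or monotonicity of $v_B$, which together with $v_B(\emptyset)=0$ gives nonnegativity); it is never $B$'s stability that delivers the constant $\frac{1}{2}$, and the counterexample above shows the lemma genuinely needs such a condition. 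So to complete your proof you should either add that assumption explicitly---monotone $v_B$ makes your pointwise bound $\overline{v}_A(S)-\overline{v}_B(S)\le(n-|S|)\gamma$ valid, since the $B$-marginals you subtract are then nonnegative---or weaken the conclusion to $\varphi_A(v)\le n\gamma$.
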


\begin{proof}

The claim follows by proving that if $v\in G$ is $\gamma$-stable in the sense defined in the statement of the lemma, then 
\begin{equation*}
    \sum_{S\subseteq N}\tau_{S,N+1}\cdot v(S)\leq\frac{n}{2}\cdot\gamma
\end{equation*}

To prove this, we employ the second view of the ML designer's value from the previous lemma; in particular, it guarantees that 
\begin{equation*}
    \sum_{S\subseteq N}\tau_{S,N+1}\cdot v(S) =\sum_{S\subseteq N}\tau_{S,N+1}\overline{v}(S)
\end{equation*}

Which we can simplify as follows:
\begin{align*}
    & \sum_{S\subseteq N}\tau_{S,N+1}\overline{v}(S)\\ &=\sum_{S\subseteq N}\tau_{S,N+1}\sum_{i\in N-S}[v(S+i)-v(S)]\\ &\leq\sum_{S\subseteq N}\tau_{S,N+1}\sum_{i\in N-S}\gamma\\&=\sum_{S\subseteq N}\tau_{S,N+1}\left|N-S\right|\cdot\gamma\\&=\gamma\cdot\sum_{S\subseteq N}\tau_{S,N+1}\left|N-S\right|\\&=\gamma\sum_{k=0}^{n}{n \choose k}\tau_{S,N+1}(n-k)\\&=\gamma\sum_{k=0}^{n}\frac{n-k}{n+1}\\&=\gamma\frac{n(n+1)}{2(n+1)}\\&=\frac{n}{2}\cdot\gamma
\end{align*}

Together, we can conclude the required.
\end{proof}

The following example demonstrates that the upper-bound is tight.

\begin{exmp}
Suppose that $A$ and $B$ are given by $v_A(S)=\card{S}$ and $v_B(S) \equiv 0$. In this case, the value of $A$ under Extended Shapley is exactly $\frac{n}{2}$:

\begin{align*}
    \varphi_{A}(v)&=\sum_{S\subseteq N}\tau_{S,N+1}\cdot[v_{A}(S)-v_{B}(S)] \\&=\sum_{S\subseteq N}\tau_{S,N+1}\cdot v_{A}(S)=\sum_{k=0}^{n}{n \choose k}\tau_{S,N+1}\cdot k \\&=\sum_{k=0}^{n}\frac{k}{n+1} = \frac{n}{2}
\end{align*}

Note that this is exactly the upper-bound from Lemma \ref{lemma:shapley_upperbound}, since $A$ is $\gamma=1$ stable and $B$ is $\gamma=0$ stable.

\end{exmp}

\section{Algorithmic Approximations}
\label{appendix:alg}

The TMC-Shapley was introduced by~\cite{ghorbani2019data} as a method for efficient approximation of the data Shapley values. The algorithm relies on two approximation schemes:
\emph{(i)}  Rearranging Eq.~\ref{eqn:shapley} we can write the following: $\phi_i = \frac{1}{n!} \mathbb{E}_{\pi \sim \Pi} [v(S_\pi^i \cup \{i\}) - v(S_\pi^i)]$, 
where $\Pi$ is the uniform distribution over all $n!$ permutations, $S_{\pi}^{i}$ is the set of data points coming before datum $i$ in permutation $\pi$ ($S_{\pi}^{i}= \emptyset$ if $i$ is the first element) which means computing the Data Shapley value reduces to computing the expectation of a bounded random variable. As a result, monte-carlo sampling could be used to have an unbiased estimation.
\emph{(ii)} For machine learning models, the performance $v(S \subseteq N)$ tends to saturate as $|S|$ increases, meaning, for a large enough $|S|$, the marginal contribution $v(S + i) - v(S)$ is small and could be approximated by zero for a given tolerance of bias in our approximations. The details of the original TMC-Shapley algorithm are described in Alg.~\ref{alg:TMC-Shapley}. Our adaptation of this method to the setting of extended Shapley is described by Alg.~\ref{alg:TMC-EShapley}.

\begin{algorithm}[h]
  \caption{\textbf{Truncated Monte Carlo Shapley}}
  \label{alg:TMC-Shapley}
      \begin{algorithmic}
        \STATE {\bfseries Input:}  Train data $N = \{1,\dots,n\}$, learning algorithm A, performance metric $v$
        \STATE {\bfseries Output:} Data Shapley values: $\phi_1,\dots,\phi_n$
        \vspace{1mm}
        \STATE Initialize $\phi_i=0$ for $i=1,\dots,n$ and $t=0$
        \WHILE{Data Shapley values have not converged}
            \STATE $t \leftarrow t+1$
            \STATE $\pi^{t}$: Random permutation of $\{1,\ldots,n\}$
            \STATE $v^t_0 \leftarrow v_A(\emptyset)$ 
            \FOR{$j \in \{1,\ldots,n\}$}
                \IF{$|v_A(N) - v^t_{j-1}| < \mbox{Performance Tolerance}$}
                    \STATE $v^t_{j} = v^t_{j-1}$
                \ELSE
                    \vspace{0.5mm}
                    \STATE $v^t_j \leftarrow  v_A(\{\pi^{t}[1],\dots,\pi^{t}[j]\})$
                    \vspace{0.5mm}
                \ENDIF
                \STATE $\phi_{\pi^t[j]} \leftarrow \frac{t-1}{t}\phi_{\pi^{t-1}[j]} + \frac{1}{t}(v^t_{j} - v^t_{j-1}) $  
                \vspace{1mm}
             \ENDFOR
        \ENDWHILE
        \vspace{1mm}
      \end{algorithmic}
\end{algorithm}

\begin{algorithm}[h]
  \caption{\textbf{Truncated Monte Carlo Extended Shapley}}
  \label{alg:TMC-EShapley}
      \begin{algorithmic}
        \STATE {\bfseries Input:}  Train data $N = \{1,\dots,n\}$, learning algorithms A and B, performance metric $v$
        \STATE {\bfseries Output:} Data Shapley values: $\varphi_1,\dots,\varphi_n$
        \vspace{1mm}
        \STATE Initialize $\varphi_i=0$ for $i=1,\dots,n$ and $t=0$ and $\varphi_A = 0$
        \WHILE{Data Shapley values have not converged}
            \STATE $t \leftarrow t+1$
            \STATE $\pi^{t}$: Random permutation of $\{1,\ldots,n\}$
            \STATE $v^t_{A, 0} \leftarrow v_A(\emptyset)$,   $v^t_{B, 0} \leftarrow v_B(\emptyset)$
            \FOR{$j \in \{1,\ldots,n\}$}
                \IF{$|v_A(N) - v^t_{A,j-1}| < \mbox{Performance Tolerance}$}
                    \STATE $v^t_{A,j} = v^t_{A,j-1}$
                \ELSE
                    \vspace{0.5mm}
                    \STATE $v^t_{A, j} \leftarrow  v_A(\{\pi^{t}[1],\dots,\pi^{t}[j]\})$
                    \vspace{0.5mm}
                \ENDIF
                \vspace{0.5mm}
                \IF{$|v_B(N) - v^t_{B,j-1}| < \mbox{Performance Tolerance}$}
                    \STATE $v^t_{B,j} = v^t_{B,j-1}$
                \ELSE
                    \vspace{0.5mm}
                    \STATE $v^t_{B,j} \leftarrow  v_B(\{\pi^{t}[1],\dots,\pi^{t}[j]\})$
                    \vspace{0.5mm}
                \ENDIF

                 \vspace{0.5mm}
                 \STATE
                $\varphi_{\pi^t[j]}  \leftarrow \frac{t-1}{t}\varphi_{\pi^{t-1}[j]} + \frac{w_{t-1,n}}{t}(v^t_{A,j} - v^t_{A,j-1}) + \frac{1 - w_{t-1,n}}{t}(v^t_{B,j} - v^t_{B,j-1}) $  
                
                \vspace{0.5mm}
                \STATE
                $\varphi_A \leftarrow \frac{t-1}{t}\varphi_{A} + \frac{1}{t} (v^t_{A,j} - v^t_{A,j-1}) - (v^t_{B,j} - v^t_{B,j-1})$
             \vspace{0.5mm} 
             \ENDFOR
             
        $\varphi_A \leftarrow \frac{1}{n+1} \varphi_A$    
        \ENDWHILE
        \vspace{1mm}
      \end{algorithmic}
\end{algorithm}

\clearpage
\section{Experiments}

\label{appendix:exp_details}
\subsection{Computing extended-Shapley for a large dataset}
\label{sec:appendix:details:income}
In this experiment, we use the adult income binary prediction task from the UCI repository of machine learning datasets.~\cite{UCI} The dataset has more than 48000 data points. We use 45000 data points as our training set $N$ and the rest as the test set. Using Alg.~\ref{alg:TMC-EShapley}, we compute the extended Shapley values for $5$ different learning algorithms shown in Fig.~\ref{fig:income}. As it is clear, the Adaboost algorithm has a positive extended-Shapley value versus every other algorithm.

\begin{figure*}[ht]
    \centering
    \captionsetup{width=.9\linewidth}
    \includegraphics[width=0.55\columnwidth]{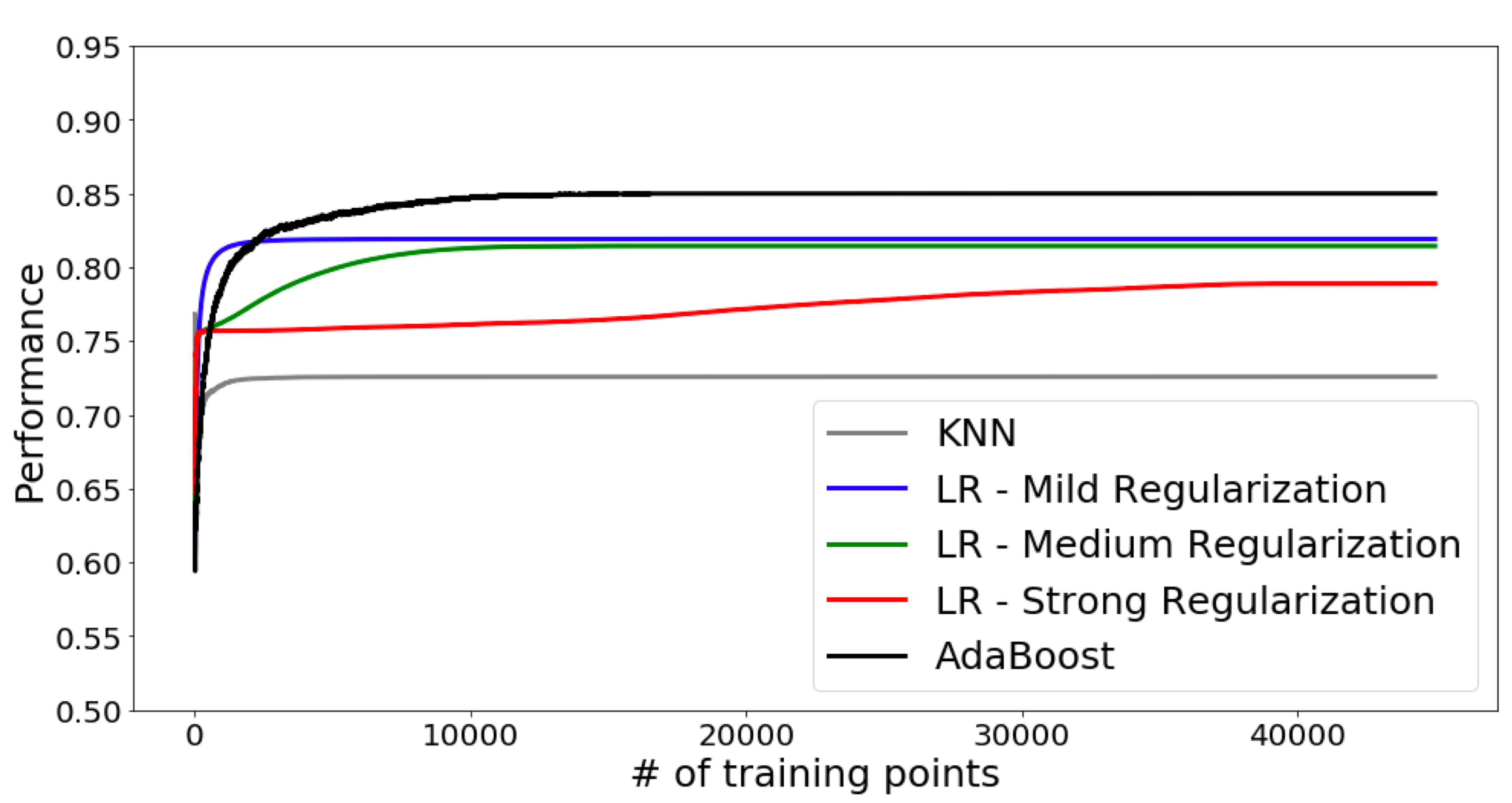}
    \caption{Performance plots for $5$ different algorithms are computed for the adult income data set.}
    \label{fig:income}
\end{figure*}

\subsection{Details of the experiments in Section  \ref{sec:applications:accountability}}
\label{sec:appendix:details:accountability}
Every point on the x-axis of the right hand side of Figure \ref{fig:accountability} corresponds to a problem instance as illustrated in Figure \ref{fig:source_target_details}. 

\begin{figure*}[h]
    \centering
    \captionsetup{width=.9\linewidth}
    \includegraphics[width=0.7\columnwidth]{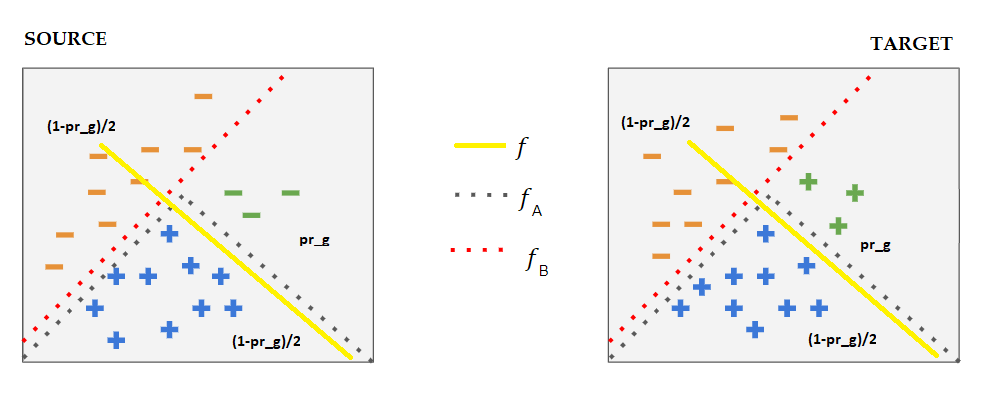}
    \caption{A problem instance is defined by $p_g$, the fractional mass of the green points. They are mis-labeled: in the source distribution their label is $-$, yet  in the target distribution it is $+$. The other two groups are identical between the source and target distribution, and their fractional mass are given by $p_y = p_b = (1-p_g)/2$. }
    \label{fig:source_target_details}
\end{figure*}

We now describe how we compute the value of the algorithm $A$ for each problem instance. It is sufficient to define $v_A$ and $v_B$. For simplicity, we assume that the classifier obtained by training algorithms $A$ and $B$ on a subset $S \subseteq N$ is only a function of the colors of the points in $S$; see Table \ref{table:accountabiliy:classifiers} for a full specification of the assumed behaviours of $A$ and $B$ and the implication for the values of different subsets.

\begin{table}[h]
\centering
\captionsetup{width=.85\linewidth}

\begin{tabular}{@{}lcccc@{}}
\toprule
 colors($S$) & $A(S)$  & $B(S)$ \\
\midrule
y, b, g & $f_A$ &  $f_B$ \\
y, b  &   $f_B$ &   $f_B$ \\
b, g  &   $f$ &   $f$ \\
y, g  & $h^-$ &   $h^-$ \\
y  & $h^-$ &   $h^-$ \\
g  & $h^-$ &   $h^-$ \\
b  & $h^+$ &   $h^+$ 
\\ \bottomrule
\end{tabular}
\quad
\begin{tabular}{@{}lcccc@{}}
\toprule
 colors($S$) & $v_A(S)$  & $v_B(S)$ \\
\midrule
y, b, g & $1-p_g$ &  $1$ \\
y, b  &   $1$ &   $1$ \\
b, g  &   $p_b + \frac{1}{2}p_y$ &   $p_b + \frac{1}{2}p_y$ \\
y, g  & $p_y$ &   $p_y$ \\
y  & $p_y$ &   $p_y$ \\
g  & $p_y$ &  $p_y$ \\
b  & $p_b+p_g$ &   $p_b+p_g$
\\ \bottomrule
\end{tabular}
\end{table}
\begin{table}[H]

\captionsetup{width=.85\linewidth}
    \centering


    \caption{\textbf{Left, bottom to top:}  when there are only blue points, the entire training set is labeled positively and the output is $h^+$, the classifier that labels everything as positive. Similarly, when there are only either yellow or green points, the output is $h^-$.  When $S$ consists of blue and green points, both models output the linear classifier $f$ (yellow line). When $S$ consists of yellow and blue points, both models output the linear classifier $f_B$. Finally, when $S$ consists of all three colors, the linear model $B$ outputs $f_B$ whereas $A$ outputs the non-linear classifier $f_A$. \textbf{Right, bottom to top:}  The performance of $h^+$ is $p_g+p_b$ as it correctly classifies both the blue and the green points; similarly, the performance of $h^-$ is $p_y$ as it correctly classifies only the yellow points. The performance of $f$ is $p_b + \frac{1}{2}p_y$, since on $D_T$ it correctly classifies the blue points and half of the yellow points. The performance of $f_A$ is $1-p_g$, since it only errs on the green points. Finally, the performance of $f_B$ is 1.0 since it correctly classifies all the points. }
    \label{table:accountabiliy:classifiers}
\end{table}


\end{appendix}

\newpage

\bibliographystyle{apalike}
\bibliography{refs}

\end{document}